\definecolor{darkgreen}{rgb}{0.1,0.5,0.1}
\newtheorem{theorem}{Theorem}
\newtheorem{lemma}[theorem]{Lemma}
\newtheorem{proposition}{Proposition}
\newtheorem{proof}{Proof}
\definecolor{iccvblue}{rgb}{0.21,0.49,0.74}
\title{Marginalized Generalized IoU (MGIoU): \\ A Unified Objective Function for Optimizing Any Convex Parametric Shapes}
\author{Duy-Tho Le$^{1}$, Trung Pham$^{2}$, Jianfei Cai$^{1}$, Hamid Rezatofighi$^{1}$\\
$^{1}$Monash University, $^{2}$NVIDIA \\
{\tt\small \{tho.le1, jianfei.cai, hamid.rezatofighi\}@monash.edu, trung.pham@nvidia.com}
}
\begin{document}
\maketitle

\begin{abstract}
\label{sec:introduction}
Optimizing the similarity between parametric shapes is crucial for numerous computer vision tasks, where Intersection over Union (IoU) stands as the canonical measure. However, existing optimization methods exhibit significant shortcomings: regression-based losses like L1/L2 lack correlation with IoU, IoU-based losses are unstable and limited to simple shapes, and task-specific methods are computationally intensive and not generalizable across domains. As a result, the current landscape of parametric shape objective functions has become scattered, with each domain proposing distinct IoU approximations. To address this, we unify the parametric shape optimization objective functions by introducing \textbf{Marginalized Generalized IoU (MGIoU)}, a novel loss function that overcomes these challenges by projecting structured convex shapes onto their unique shape Normals to compute one-dimensional normalized GIoU. MGIoU offers a simple, efficient, fully differentiable approximation strongly correlated with IoU. We then extend MGIoU to \textbf{MGIoU$^+$} that supports optimizing unstructured convex shapes. Together, MGIoU and MGIoU$^+$ unify parametric shape optimization across diverse applications. Experiments on standard benchmarks demonstrate that MGIoU and MGIoU$^+$ consistently outperform existing losses while reducing loss computation latency by 10-40x. Additionally, MGIoU and MGIoU$^+$ satisfy metric properties and scale-invariance, ensuring robustness as an objective function. We further propose \textbf{MGIoU$^-$} for minimizing overlaps in tasks like collision-free trajectory prediction. Code is available at https://ldtho.github.io/MGIoU/
\end{abstract}

\section{Introduction}
Parametric shape optimization is a core problem in computer vision, robotics, and graphics, with applications spanning 2D/3D object detection \cite{yolov1, kld,gwd, bi2021disentangled}, 6D pose estimation \cite{omni3d, unimode}, shape registration, and trajectory prediction \cite{waymo, mtr, mtr++}. At its heart, this optimization seeks to maximize the similarity between predicted and ground-truth shapes, commonly quantified using the Intersection over Union (IoU). IoU measures the overlap between two parametric shapes relative to their combined area or volume, providing a standardized and scale-invariant metric widely adopted for evaluating shape similarity.

Despite its popularity, directly optimizing IoU (or its improved variant, Generalized IoU (GIoU)~\cite{giou}) for arbitrary convex shapes presents significant challenges. Computing the IoU/GIoU analytically for many convex shapes, \eg two 3D ellipsoids, is non-trivial. Even for simpler shapes like rotated rectangles, 2D polygons or 3D cuboids~\cite{pytorch3d}, analytical calculation is feasible, but using IoU/GIoU directly as an objective function within gradient-based optimization remains challenging and computationally expensive. Moreover, certain shape parameters, such as rotation angles, lead to erratic gradient landscapes, making optimization unstable when using IoU/GIoU losses. Consequently, aside from a few cases — such as axis-aligned bounding boxes~\cite{giou} or 3D bounding boxes with 1 rotation DoF~\cite{mmcv, zhou2019iou} — IoU/GIoU-based optimization has not been widely adopted for general shape optimization tasks.

As a result, the objective function for parametric shape optimization is highly fragmented in different application domains. Different applications either: \emph{(i)} rely on basic losses shown to work empirically, such as simple parametric regression losses (e.g., L-norms), OKS~\cite{oks}, or Chamfer distance for shape vertices~\cite{qi2017pointnet}, or \emph{(ii)} adopt approximations of IoU/GIoU, such as Gaussian-based models for rotated boxes~\cite{yang2021r3det, gwd, kfiou}, or vertex-based approaches for quadrilaterals~\cite{bi2021disentangled, qrn, dmp, textboxes++}. However, these approaches often lack direct correlation with the true IoU/GIoU and in some cases, they may violate important properties, such as scale invariance or metric consistency. Moreover, many of these methods require extensive task-specific tuning~\cite{gwd,kld}, ultimately leading to suboptimal shape alignment or over-fitted to specific datasets. Thus, there is currently no unified objective function that works effectively across different applications and shape parameterizations while maintaining strong correlation with standard IoU-based metrics.

In this work, we introduce \textbf{Marginalized Generalized IoU (MGIoU)}, a novel geometric loss function explicitly designed to enable flexible and stable optimization of arbitrary convex parametric shapes in any dimension. MGIoU simplifies complex shape overlap calculations into efficient marginalization of 1D GIoU operations by projecting the shapes onto their associated normals (\eg, edge normals in 2D, face normals in 3D, or semi-axes for ellipses/ellipsoids). These projections allow for a differentiable, robust, and stable overlap evaluation, even when the shapes do not physically overlap in space.

MGIoU unifies shape optimization into a single coherent loss term applied directly to shape normals, enabling holistic adjustment of shape vertices and parameters (\eg, position, size, and orientation) without requiring balancing or fine-tuning of multiple separate loss terms. Its simplicity allows seamless integration into existing optimization pipelines, making it a drop-in replacement across diverse applications such as 2D oriented object detection, 3D shape estimation, and polygonal shape fitting. Furthermore, the mathematical construction of MGIoU ensures compliance with essential scale-invariance and metric properties, reinforcing its theoretical soundness as a reliable measure.

We propose three variations of this objective function, making MGIoU applicable across a wide range of tasks. The primary version, MGIoU, is designed for optimizing \emph{structured} convex shapes, where both the source and target shapes share the same parametric domain, \eg both being rectangles, ellipses, cuboids \etc (see \cref{fig:teaser} (C, D)).

An extended version, denoted as \textbf{MGIoU$^+$}, supports optimization between \emph{unstructured} convex shapes, where the source and target may belong to different shape families or have differing numbers of vertices, such as convex polygons or polyhedra with varying and arbitrary vertex and edge counts (see \cref{fig:teaser} (A, B) as two examples). To accommodate this, we use a vertex-based parametrization for both shapes, under the condition that the target shape always has more vertices than the source. In MGIoU$^+$, we introduce an additional convexity regularizer to ensure predicted shapes remains convex during optimization.

We also propose a complementary loss, denoted as \textbf{MGIoU$^-$}, which is specifically designed for minimizing overlap between two arbitrary convex shapes. This objective is particularly useful for applications such as agent trajectory prediction, where the goal is to reduce the risk of collisions between predicted trajectories. Unlike standard shape alignment objectives, MGIoU$^-$ directly minimizes the IoU between the predicted and reference shapes. By applying the same shape projection mechanism, it robustly penalizes overlaps, encouraging safer trajectory predictions. We further extend MGIoU$^-$ to handle sequences of shapes, enabling \emph{spatio-temporal} optimization for tracking purposes.

We evaluate the effectiveness of MGIoU and its variants, MGIoU$^+$ and MGIoU$^-$, across multiple applications on standard benchmarks. Our results consistently show improved performance over traditional losses, while also significantly reducing the computational overhead compared to more complex existing losses.
\\
Our key contributions can be summarized as follows:
\begin{itemize}
    \item We propose \textbf{MGIoU}, a novel geometric loss function for the flexible and stable optimization of arbitrary convex parametric shapes in any dimension. MGIoU simplifies complex shape overlap calculations into efficient marginalization of 1D GIoU operations by projecting the shapes onto their associated normals. It unifies position, dimension, and orientation optimization into a single, differentiable objective, eliminating the need for task-specific loss balancing or tuning.
    \item We extend MGIoU to \textbf{MGIoU$^+$}, which handles optimization between \emph{unstructured} convex shapes with different parametric structures, and \textbf{MGIoU$^-$}, a complementary loss designed for minimizing shape overlap in applications such as collision-free trajectory prediction.
    \item We conduct extensive experiments across diverse tasks, including 2D/3D shape alignment, oriented object detection, and trajectory prediction. Our results demonstrate the effectiveness of MGIoU and its variants in improving performance while significantly reducing computational overhead compared to existing shape optimization losses.
\end{itemize}

\section{Related Works}
\label{sec:related_work}

The selection of loss functions in shape optimization has evolved significantly to address the diverse demands of fundamental vision and robotics tasks, including various object detection tasks (\eg, 2D, 3D, and quadrilateral object detection), 6-DoF object pose estimation, state regression in trajectory forecasting \etc. This section reviews the development of loss functions across these domains, highlighting their limitations and motivating the need for a unified approach like Marginalized Generalized IoU (MGIoU).

\noindent \textbf{Loss Functions in 2D Object Detection.} 
Loss function design in object detection has progressed to optimize localization accuracy and alignment with Intersection over Union (IoU), the standard evaluation metric. Early models~\cite{yolov1,yolov2,fasterrcnn} relied on L1/L2 regression losses to minimize coordinate differences between predicted and ground-truth bounding boxes. While computationally efficient, these losses lack direct correlation with IoU, often leading to suboptimal overlap~\cite{giou}. To address this, IoU-based losses emerged for 2D axis-aligned detection, with Generalized IoU (GIoU)~\cite{giou} introducing a differentiable penalty based on the tightest enclosing box to handle non-overlapping cases, though it does not extend beyond 2D axis-aligned geometries. Subsequent alternatives like Distance IoU (DIoU)~\cite{diou} and Complete IoU (CIoU)~\cite{ciou} enhance convergence by incorporating distance and aspect ratio terms but remain limited to axis-aligned boxes and require careful tuning.

For 2D oriented object detection, which is critical in aerial imagery and text recognition, losses like modulated rotation loss~\cite{qian2021learning} mitigate angle periodicity issues, while KFIoU~\cite{kfiou}, Gaussian Wasserstein Distance (GWD)~\cite{gwd}, and Kullback-Leibler Divergence (KLD)~\cite{kld} model bounding boxes as Gaussian distributions to approximate IoU. Skew IoU (SIoU)~\cite{yang2021r3det} further targets skewed geometries. However, these losses are computationally intensive and tailored specifically for rotated bounding boxes.

In quadrilateral detection, which is widely used in document analysis, methods such as Quadbox~\cite{keserwani2021quadbox} employ vertex regression with L1 losses, while Textboxes++~\cite{textboxes++} resolves vertex ordering via rectangular distances. QRN~\cite{qrn} sorts vertices by polar angles. Despite their effectiveness in specific tasks, these solutions are often complex, task-specific, and lack a direct correlation with IoU, necessitating further refinement.

\noindent\textbf{Loss Functions in 3D Object Recognition \& 6-DoF Object Pose Estimation.}  In 3D object Regconition, crucial for applications like autonomous driving, bounding boxes are defined by position, dimensions, and orientation, requiring loss functions that account for both spatial and rotational components. Common 3D object recognition approaches assume only 1-DoF in rotation (usually yaw), recently the attention shifts to 3-DoF (yaw, pitch, roll) due to the need for Virtual Reality \cite{omni3d, objectron, hypersim, arkitscenes} and 6-DoF Object Pose Estimation \cite{shapenet, scannet}. Many approaches combine regression losses for box parameters with adaptations for 3D geometry. For example, SO3 losses~\cite{su2020deep} operate directly on the rotation manifold to ensure proper handling of 3D orientations, while Chamfer distance~\cite{qi2017pointnet} measures bidirectional point set distances, commonly used in shape registration and applicable to 3D detection. However, both SO3 and Chamfer distance lack direct correlation with IoU, limiting their effectiveness in optimizing bounding-box orientation and overlap in 3D space. This underscores the need for a unified loss function that jointly optimizes spatial and rotational alignment while maintaining IoU consistency.

\noindent\textbf{Collision-aware Penalties Losses.} Collision-aware losses incorporate collision avoidance and feasibility penalties directly into the training objective, ensuring that predicted trajectories not only align closely with ground truth but also conform to essential physical and social driving constraints. For instance, \cite{niedoba2019improving} proposes an auxiliary off-road loss to penalize off-road trajectories. In \cite{kim2022diverse}, an auxiliary LaneLoss is introduced to encourage diverse trajectories enough to cover every possible maneuver. Similarly, RouteLoss \cite{zhang2022trajgen} encourages generated trajectories to remain close to the nearest available route, thereby reducing off-road predictions without sacrificing diversity. However, these losses primarily address interactions between static objects and the ego vehicle, neglecting dynamic interactions among multiple road agents. Closest in spirit to our proposed MGIoU$^-$, TrafficSim \cite{trafficsim} introduces a common-sense loss aimed at avoiding collision trajectories by approximating objects using five circles and computing the L2 distance between centroids. While this approach captures the overall proximity of objects, it does not precisely reflect the true boundaries of road agents and is tailored for simulation training, limiting its direct applicability to real-world dynamic interactions.

\noindent\textbf{Summary and Motivation for MGIoU.} The evolution of loss functions across various tasks, including object detection, shape registration, and trajectory prediction, highlights a trade-off between accuracy, computational efficiency, and generalizability. In object detection, standard regression losses lack alignment with IoU, while IoU-based losses like GIoU struggle with non-axis-aligned boxes. Task-specific losses for rotated, 3D, and quadrangle detection offer improvements but are often computationally expensive and narrowly focused. In shape registration, traditional distance metrics like Hausdorff or Chamfer may not directly correlate with overlap, and in trajectory prediction, distance-based penalties for collision avoidance may not capture spatial overlaps effectively. MGIoU addresses these shortcomings by providing a unified, efficient loss function that leverages 1D projection to optimize spatial relationships across these diverse tasks.
\vspace{-0.5em}
\section{Methodology}
\vspace{-0.5em}
\label{sec:method}
\begin{algorithm}[!t]
\caption{\textbf{MGIoU and MGIoU$^+$} - Maximize Shape Overlap}
\label{algo:MGIoU}
\small{
\SetKwInOut{Input}{input}\SetKwInOut{Output}{output}
\Input{Predicted vertices \(\mathbf{P} \in \mathbb{R}^{N_P \times D}\), ground truth vertices \(\mathbf{G} \in \mathbb{R}^{N_G \times D}\), convexity weight \(\lambda\), shape type (structured or unstructured)}
\Output{\(\mathcal{L}_{\text{total}}\)}
Compute unique normals $\mathcal{A}$ of $\mathbf{P}$ and $\mathbf{G}$ \\
Initialize overlap sum $S = 0$ \\
\For{each unique normal $\mathbf{a}_i \in \mathcal{A}$}{
  $\text{proj}_{\mathbf{P}, \mathbf{a}_i} = \mathbf{P} \cdot \mathbf{a}_i$, $\text{proj}_{\mathbf{G}, \mathbf{a}_i} = \mathbf{G} \cdot \mathbf{a}_i$ \\
  $\min_{\mathbf{P}, \mathbf{a}_i} = \min(\text{proj}_{\mathbf{P}, \mathbf{a}_i})$, $\max_{\mathbf{P}, \mathbf{a}_i} = \max(\text{proj}_{\mathbf{P}, \mathbf{a}_i})$ \\
  $\min_{\mathbf{G}, \mathbf{a}_i} = \min(\text{proj}_{\mathbf{G}, \mathbf{a}_i})$, $\max_{\mathbf{G}, \mathbf{a}_i} = \max(\text{proj}_{\mathbf{G}, \mathbf{a}_i})$ \\
  $|P_{A_i} \cap G_{A_i}| = \max\left(0, \min(\max_{\mathbf{P}, \mathbf{a}_i}, \max_{\mathbf{G}, \mathbf{a}_i}) \right. \left. - \max(\min_{\mathbf{P}, \mathbf{a}_i}, \min_{\mathbf{G}, \mathbf{a}_i})\right)$ \\
  $|P| = \max_{\mathbf{P}, \mathbf{a}_i} - \min_{\mathbf{P}, \mathbf{a}_i}$ \\
  $|G| = \max_{\mathbf{G}, \mathbf{a}_i} - \min_{\mathbf{G}, \mathbf{a}_i}$ \\
  $|P_{a_i} \cup G_{a_i}| = |P| + |G| - |P_{a_i} \cap G_{a_i}|$ \\
  $|C| = \max(\max_{\mathbf{P}, \mathbf{a}_i}, \max_{\mathbf{G}, \mathbf{a}_i}) - \min(\min_{\mathbf{P}, \mathbf{a}_i}, \min_{\mathbf{G}, \mathbf{a}_i})$ \\
  $\text{IoU} = \frac{|P \cap G|}{|P \cup G|}$ \\
  $\text{GIoU}^{1D}_i = \text{IoU} - \frac{|C| - |P_{a_i} \cup G_{a_i}|}{|C|}$ \\
  $S = S + \text{GIoU}^{1D}_i$
}

$\text{MGIoU} = \frac{S}{|\mathcal{A}|}$  

$\mathcal{L}_{\text{MGIoU}} = \frac{1 - \text{MGIoU}}{2}$ \\

\If{shape is unstructured}{
  Compute $\mathcal{L}_{\text{convexity}}$: \\
  \For{each $i = 0$ to $N_P - 1$}{
    $\mathbf{e}_i = \mathbf{p}_{(i+1) \mod N_P} - \mathbf{p}_i$ \\
    $\mathbf{n}_i = (-e_{i,y}, e_{i,x})$ \\
    $s_1 = \sum_{j=0}^{N_P-1} \max(0, -(\mathbf{p}_j - \mathbf{p}_i) \cdot \mathbf{n}_i)$ \\
    $s_2 = \sum_{j=0}^{N_P-1} \max(0, (\mathbf{p}_j - \mathbf{p}_i) \cdot \mathbf{n}_i)$ \\
    $\text{penalty}_i = \min(s_1, s_2)$ \\
  }
  $\mathcal{L}_{\text{convexity}} = \frac{1}{N_P} \sum_{i=0}^{N_P-1} \text{penalty}_i$
}
\Else{
  \Return{$\mathcal{L}_{\text{MGIoU}}$}
}
$\mathcal{L}_{\text{MGIoU}^+} = \mathcal{L}_{\text{MGIoU}} + \lambda \mathcal{L}_{\text{convexity}}$ \\
\Return{$\mathcal{L}_{\text{MGIoU}^+}$}
}
\end{algorithm}

We introduce \textbf{Marginalized Generalized IoU (MGIoU)}, a novel loss function designed to optimize shape alignment for \textit{structured} convex shapes in object detection tasks, along with its variants: \textbf{MGIoU\(^+\)} for \textit{unstructured} convex shapes and \textbf{MGIoU\(^-\)} for minimizing overlaps. The key idea of MGIoU is to project vertices of arbitrary convex shapes onto their normals, computing a GIoU in 1D projection space. This enables MGIoU to generalize across 2D rotated and 3D 6-DoF detection tasks, unifying position, size, and orientation optimization in a single, differentiable loss term. MGIoU\(^+\) extends this to unstructured shapes with a convexity regularizer, quantified on quadrilateral object detection task, while MGIoU\(^-\) adapts it to penalize overlaps, enhancing collision avoidance in trajectory prediction.
\vspace{-0.5em}
\subsection{Identifying Shape Normals}
\vspace{-0.5em}
Given two arbitrary shapes \(P, G \subseteq \mathbb{R}^D\) (\(D=2\) for 2D, \(D=3\) for 3D), with vertices \(\mathbf{P} \in \mathbb{R}^{N_P \times D}\) and \(\mathbf{G} \in \mathbb{R}^{N_G \times D}\), where \(N_P\) and \(N_G\) are the numbers of vertices for \(P\) and \(G\) respectively, MGIoU begins by constructing a set of unique directional Normals \(\mathcal{A}\) . In 2D polygons, these Normals originate from edges, computed as 90-degree rotations of consecutive vertex difference vectors. For ellipses or ellipsoids, the normals align with semi-axes directions. In 3D shapes, these Normals correspond to face normals derived from the surface geometry. Extremely regular shapes (\eg, rectangles in 2D, cuboids in 3D, see \cref{fig:teaser}) typically produce duplicate or parallel Normals; these redundancies are eliminated, leaving only a small number of unique directional Normals that capture essential geometric attributes. For example, a 2D rotated rectangle (\(N=4\)) has just two unique Normals; a 3D cuboid (\(N=8\)) only three. This efficiency reduces computational complexity significantly.
\vspace{-0.5em}
\subsection{MGIoU and MGIoU$^+$ for Structured and Unstructured Shapes}
\vspace{-0.5em}

For structured convex shapes (\eg, rectangles, cuboids), MGIoU maximizes overlap between the predicted shape \(P\) and ground truth \(G\) by projecting their vertices onto \(\mathcal{A}\). The GIoU$^{1D}$ is computed per Normal $a_i \in \mathcal{A}$ using a simplified version of GIoU for 1D (see Appendix), and then averaged across normals \(\text{MGIoU} = \frac{1}{|\mathcal{A}|} \sum_{\mathbf{a} \in \mathcal{A}} \text{GIoU}^{1D}_\mathbf{a}\), finally \(\mathcal{L}_{\text{MGIoU}} = \frac{1 - \text{MGIoU}}{2}\).We have \(\mathcal{L}_{\text{MGIoU}} = 0\) for perfect overlap (\(\text{MGIoU} = 1\)),
\(\mathcal{L}_{\text{MGIoU}} \to 1^-\) as shapes separate (\(\text{MGIoU} \to -1^+\)). As shown in \cref{algo:MGIoU}, the MGIoU is positive when projections overlap and negative when they are disjoint, resulting in \(\text{MGIoU} \in (-1, 1]\), and the final loss \(\mathcal{L}_{\text{MGIoU}} \in [0, 1)\). Given its connection to the Jaccard index and the properties of GIoU, \(\mathcal{L}_{\text{MGIoU}}\) satisfies key properties that make it a robust loss function for shape optimization (See Appendix):
\begin{enumerate}
    \item \textbf{Non-negativity}: \(\mathcal{L}_{\text{MGIoU}}(P, G) \geq 0\)  
    \item \textbf{Identity}: \(\mathcal{L}_{\text{MGIoU}}(P, G) = 0\) if and only if \(P = G\)  
    \item \textbf{Symmetry}: \(\mathcal{L}_{\text{MGIoU}}(P, G) = \mathcal{L}_{\text{MGIoU}}(G, P)\)  
    \item \textbf{Triangle Inequality}: \\
    \(\mathcal{L}_{\text{MGIoU}}(P, R) \leq \mathcal{L}_{\text{MGIoU}}(P, Q) + \mathcal{L}_{\text{MGIoU}}(Q, R)\)  
    \item \textbf{Scale Invariance}: \(\mathcal{L}_{\text{MGIoU}}(sP, sG) = \mathcal{L}_{\text{MGIoU}}(P, G)\) for any scalar \(s > 0\)  
\end{enumerate}
\vspace{-0.5em}
These properties enhance \(\mathcal{L}_{\text{MGIoU}}\) correlation with the IoU metric by effectively measuring shape overlap in projected space, making it a reliable and consistent loss function. 


For unstructured convex shapes (\eg, polygons with varying vertices), we apply GIoU\(^{1D}\) to all shape Normals and introduce a convexity regularizer (\(\mathcal{L}_{\text{convexity}}\)) to ensure geometric consistency. The convexity loss, \(\mathcal{L}_{\text{convexity}}\), is added for unstructured shapes like polygons to prevent unrealistic concavities, which the MGIoU loss alone might allow. It penalizes vertices that fall on the wrong side of any edge’s outward normal, using signed distances to enforce a convex shape, as outlined in Algorithm \ref{algo:MGIoU}. Specifically, for each edge \(i\), we compute the outward normal \(\mathbf{n}_i\) and then calculate the signed distances of all other vertices to this edge. If a vertex lies inside the half-plane defined by the edge (i.e., on the wrong side), it contributes to the penalty. The penalty for each edge is the minimum of the sums of positive and negative signed distances, ensuring that the loss is zero only when all vertices are on the correct side of every edge. This convexity loss is then averaged over all edges to obtain \(\mathcal{L}_{\text{convexity}}\). The signed distance provides a continuous measure of deviation from convexity, guiding the optimization toward a valid shape, which is especially critical for unstructured polygons with variable vertex counts. Combined with the MGIoU loss as \(\mathcal{L}_{\text{MGIoU}^+} = \mathcal{L}_{\text{MGIoU}} + \lambda \mathcal{L}_{\text{convexity}}\), it ensures both accurate overlap and geometric consistency.

\subsection{\textbf{MGIoU$^-$} for Minimizing Shape Overlaps}
\vspace{-0.5em}

\begin{algorithm}[!t]
\caption{\textbf{MGIoU$^-$} - Minimize Shape Overlap}
\label{algo:MGIoU-}
\small{
\SetKwInOut{Input}{input}\SetKwInOut{Output}{output}
\Input{Trajectory boxes \(\mathbf{T} = \{\mathbf{T}^t_i \in \mathbb{R}^{4 \times 2} \mid t=1,\ldots,T; i=1,\ldots,B\}\), masks \(\mathbf{M} = \{m^t_i\}\), scores \(\mathbf{S} = \{s_i\}\)}
\Output{\(\mathcal{L}_{\text{MGIoU$^-$}}\)}
}
Initialize \(\mathcal{L}_{\text{MGIoU$^-$}} = 0\) \\
\For{each \(t = 1\) to \(T\)}{
  Initialize \(\text{loss}^t_i = 0\) for \(i=1,\ldots,B\) \\
  \For{each pair \((i, j)\), \(i \neq j\)}{
    \(\mathcal{A}_{ij}^t = \{\text{normals of } \mathbf{T}^t_i, \mathbf{T}^t_j\}\) \\
    Initialize \(\mathcal{O}_{ij}^t = []\) \\
    \For{each \(\mathbf{a}_k \in \mathcal{A}_{ij}^t\)}{
      \(\text{proj}_{\mathbf{T}^t_i, k} = \mathbf{T}^t_i \cdot \mathbf{a}_k\), \(\text{proj}_{\mathbf{T}^t_j, k} = \mathbf{T}^t_j \cdot \mathbf{a}_k\) \\
      Compute \(\min\) and \(\max\) of \(\text{proj}_{\mathbf{T}^t_i, k}\), \(\text{proj}_{\mathbf{T}^t_j, k}\) \\
      \( |P \cap G| = \max(0, \min(\max_{\mathbf{T}^t_i}, \max_{\mathbf{T}^t_j}) - \max(\min_{\mathbf{T}^t_i}, \min_{\mathbf{T}^t_j})) \) \\
      \( |P| = \max_{\mathbf{T}^t_i} - \min_{\mathbf{T}^t_i} \), \( |G| = \max_{\mathbf{T}^t_j} - \min_{\mathbf{T}^t_j} \) \\
      \( |P \cup G| = |P| + |G| - |P \cap G| \), \( |C| = \max(\max_{\mathbf{T}^t_i}, \max_{\mathbf{T}^t_j}) - \min(\min_{\mathbf{T}^t_i}, \min_{\mathbf{T}^t_j}) \) \\
      \(\text{IoU} = \frac{|P \cap G|}{|P \cup G|} \) \\
      \(\text{GIoU}^{1D}_k = \text{IoU} - \frac{|C| - |P \cup G|}{|C|}\) \\
      Append \(\text{GIoU}^{1D}_k\) to \(\mathcal{O}_{ij}^t\)
    }
    Take the smallest \(\text{GIoU}^{1D}_{\text{min}}\) from \(\mathcal{O}_{ij}^t\) \\
    \(K_{ij}^t = \text{softplus}\left( \text{GIoU}^{1D}_{\text{min}} \right)\) \\
    \(\text{loss}^t_i = \text{loss}^t_i + K_{ij}^t\)
  }
}
\For{each \(i = 1\) to \(B\)}{
  \(L_i = \sum_{t=1}^T m^t_i \cdot \text{loss}^t_i\) \\
  \(\mathcal{L}_{\text{MGIoU$^-$}} = \mathcal{L}_{\text{MGIoU$^-$}} + s_i \cdot L_i\)
}
\Return{\(\mathcal{L}_{\text{MGIoU$^-$}}\)}

\end{algorithm}

Instead of maximizing parametric shape overlaps like MGIoU and MGIoU$^+$, we propose \textbf{MGIoU$^-$} variant minimize them. We chose trajectory prediction task to showcase our loss capability. Here, the goal shifts from maximizing overlap to minimizing it, ensuring predicted trajectories remain separated. Inspired by Separating Axis Theorem, MGIoU$^-$ adapts the MGIoU framework by computing pairwise overlaps between trajectory boxes at each timestep $t$. MGIoU$^-$ naturally extends to temporal scenarios by computing overlap penalties between predicted trajectory boxes at each timestep \( t \) for all pairs of trajectories \( i \) and \( j \). For each timestep \( t \), the algorithm evaluates the smallest 1D GIoU (GIoU\(^{1D}_{\text{min}}\)) across the normals of trajectory pairs, penalizing overlaps using a softplus function. These penalties are summed across all timesteps \( t = 1 \) to \( T \), weighted by ground truth masks \( m^t_i \), ensuring the loss accounts for the entire prediction horizon. For each trajectory \( i \), the loss \( L_i \) integrates temporal penalties over \( T \) timesteps, modulated by masks to handle invalid or variable-length sequences. The final MGIoU$^-$ loss is computed by summing each trajectory’s loss \( L_i \), weighted by prediction scores \( s_i \), prioritizing confident predictions across all timesteps. This temporal formulation ensures that overlaps are minimized throughout the sequence, promoting collision-free trajectories over \( T \) timesteps. The final \(\mathcal{L}_{\text{MGIoU$^-$}}\) is normalized by the classification head’s confidence scores, encouraging the model to either select safer trajectories or adjust their location/orientation regressions to reduce overlaps. Efficiency-wise, MGIoU$^-$ also benefits from parallelizable projections, with a cost of \(O(B^2 \cdot T \cdot |\mathcal{A}|)\) per batch, where \(B\) is the number of trajectories, \(T\) is the number of timesteps, and \(|\mathcal{A}|\) is typically 4 in 2D, as outlined in \cref{algo:MGIoU-}.


In summary, MGIoU and its variants (MGIoU$^+$, MGIoU$^-$) form a unified framework for shape optimization, offering flexibility and efficiency across applications.
\vspace{-0.5em}
\section{Experimental Settings}
\vspace{-0.5em}
\label{sec:settings}

\begin{table*}[!ht]\centering
\resizebox{\linewidth}{!}{%
\begin{tabular}{p{2.2cm}|ccccccccc}\toprule
\textbf{Loss} & \textbf{SUNRGBD} & \textbf{Hypersim} & \textbf{ARKitScenes} & \textbf{Objectron} & \textbf{KITTI} & \textbf{nuScenes} & \textbf{Omni3D$_{\text{Out}}$} & \textbf{Omni3D$_{\text{In}}$} & \textbf{Omni3D} \\\midrule
\textbf{L1 + SO3} & 14.19 & 6.44 & 41.27 & 52.87 & 30.86 & 26.42 & 29.22 & 19.34 & 22.21 \\\cmidrule{1-1}
\textbf{Distangled-L1 + Chamfer} & \raisebox{-1.2ex}{14.64} & \raisebox{-1.2ex}{6.89} & \raisebox{-1.2ex}{40.59} & \raisebox{-1.2ex}{54.94} & \raisebox{-1.2ex}{28.37} & \raisebox{-1.2ex}{27.07} & \raisebox{-1.2ex}{29.93} & \raisebox{-1.2ex}{20.02} & \raisebox{-1.2ex}{22.82} \\\cmidrule{1-1}
\textbf{MGIoU} & \textbf{16.82} & \textbf{8.22} & \textbf{43.76} & \textbf{56.84} & \textbf{31.95} & \textbf{29.66} & \textbf{32.08} & \textbf{21.87} & \textbf{24.86} \\\midrule
\textcolor{darkgreen}{\textbf{Rel. Imp. \%}} & 
{\textcolor{darkgreen}{\textbf{$\uparrow$18.34\%}}} & 
{\textcolor{darkgreen}{\textbf{$\uparrow$5.93\%}}} & 
{\textcolor{darkgreen}{\textbf{$\uparrow$9.03\%}}} & 
{\textcolor{darkgreen}{\textbf{$\uparrow$1.15\%}}} & 
{\textcolor{darkgreen}{\textbf{$\uparrow$15.20\%}}} & 
{\textcolor{darkgreen}{\textbf{$\uparrow$15.06\%}}} & 
{\textcolor{darkgreen}{\textbf{$\uparrow$10.27\%}}} & 
{\textcolor{darkgreen}{\textbf{$\uparrow$7.12\%}}} & 
{\textcolor{darkgreen}{\textbf{$\uparrow$9.01\%}}} \\
\bottomrule
\end{tabular}}
\vspace{-0.5em}
\caption{AP3D Comparison for 3D 6-DoF Object Detection on Omni3D. Baseline model is CubeRCNN~\cite{omni3d}. Improvement values are relative to the Distangled-L1+Chamfer loss~\cite{omni3d}. MGIoU consistently outperform the baselines in both indoor and outdoor settings}
\label{tab:3d_detection}
\vspace{-1em}
\end{table*}

This section briefly describes the datasets, baseline models, and training setups used to evaluate the MGIoU, MGIoU$^+$, and MGIoU$^-$ losses across four tasks: 2D oriented object detection, monocular 3D 6-DoF object regconition, quadrangle object detection, and collision avoidance in trajectory prediction.
\begin{figure}[tp]
    \centering
    \includegraphics[width=0.95\linewidth]{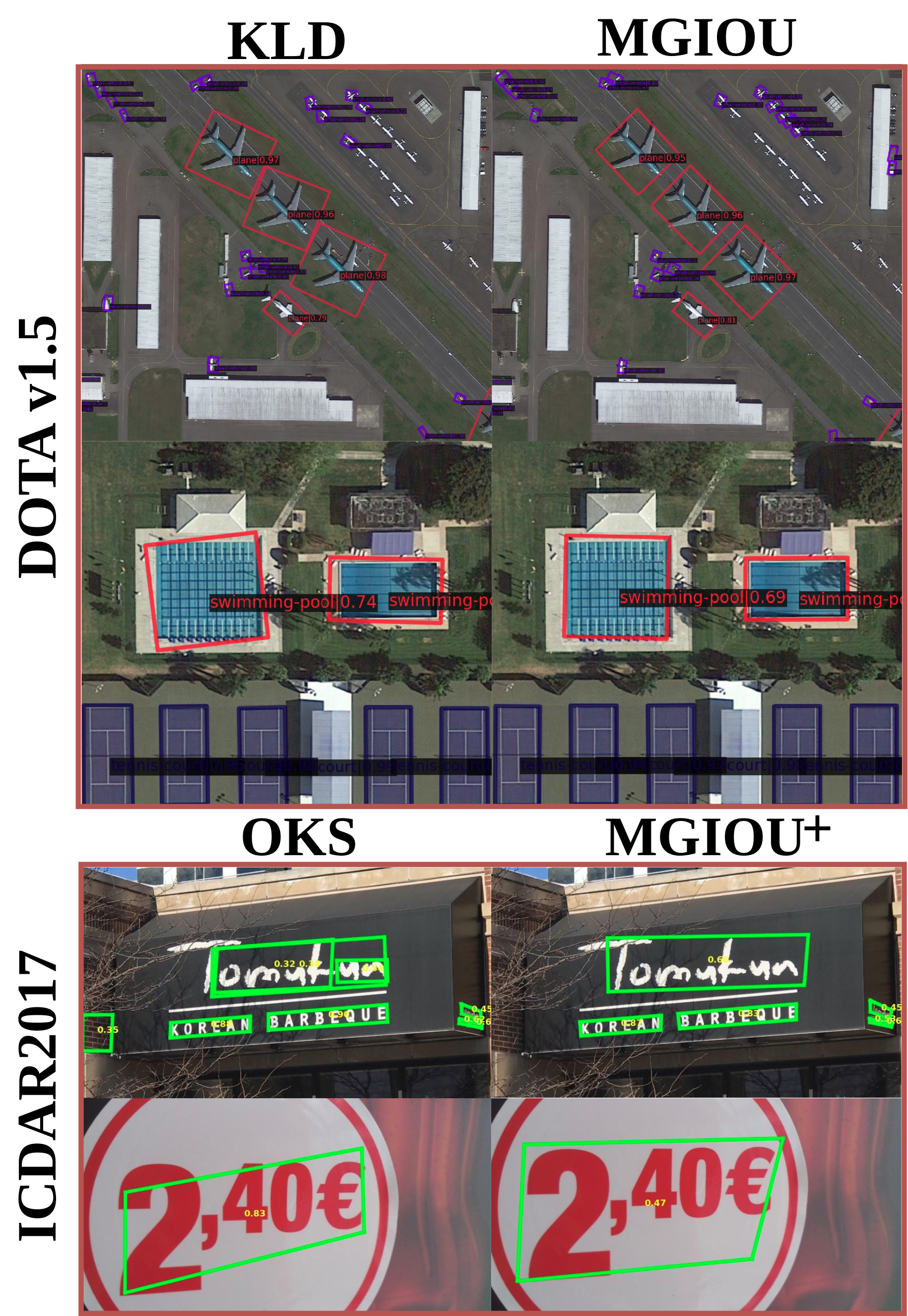}
    \caption{Qualitative visualisation (Test set images) comparing MGIoU vs KLD losses on DOTA dataset, and MGIoU$^+$ vs OKS distance on ICDAR2017 Dataset. MGIoU and MGIoU$^+$ can capture the orientation (2D oriented detection) and vertices better (quadrilaterals detection)}.
    \label{fig:dotaicdar}
    \vspace{-3em}
\end{figure}

\noindent \textbf{2D Oriented Object Detection:} We used the DOTAv1.5~\cite{xia2018dota} dataset (single-scaled split). The baseline model was RetinaNet~\cite{retinanet} (similar baseline to ~\cite{kld,gwd,kfiou}), trained for 12 epochs with SGD, a step LR scheduler (epochs 8 and 11), and a 500-iteration linear warmup.

\noindent \textbf{Monocular 3D 6-DoF Object Recognition:} We select Omni3D~\cite{omni3d} large-scale dataset, which includes SUNRGBD~\cite{sunrgbd}, Hypersim~\cite{hypersim}, ARKitScenes~\cite{arkitscenes}, Objectron~\cite{objectron}, KITTI~\cite{kitti}, and nuScenes~\cite{nuscenes} datasets. This diversity ensures a robust test bed across both indoor and outdoor 3D environments. The baseline model is CubeRCNN~\cite{omni3d} proposed by dataset authors, trained for 5,568,000 iterations on a 4090 GPU. The training adhered to the configurations detailed in the Omni3D~\cite{omni3d} paper, employing the SGD optimizer alongside a step learning rate scheduler.

\begin{figure*}[tp]
    \centering
    \includegraphics[width=\linewidth]{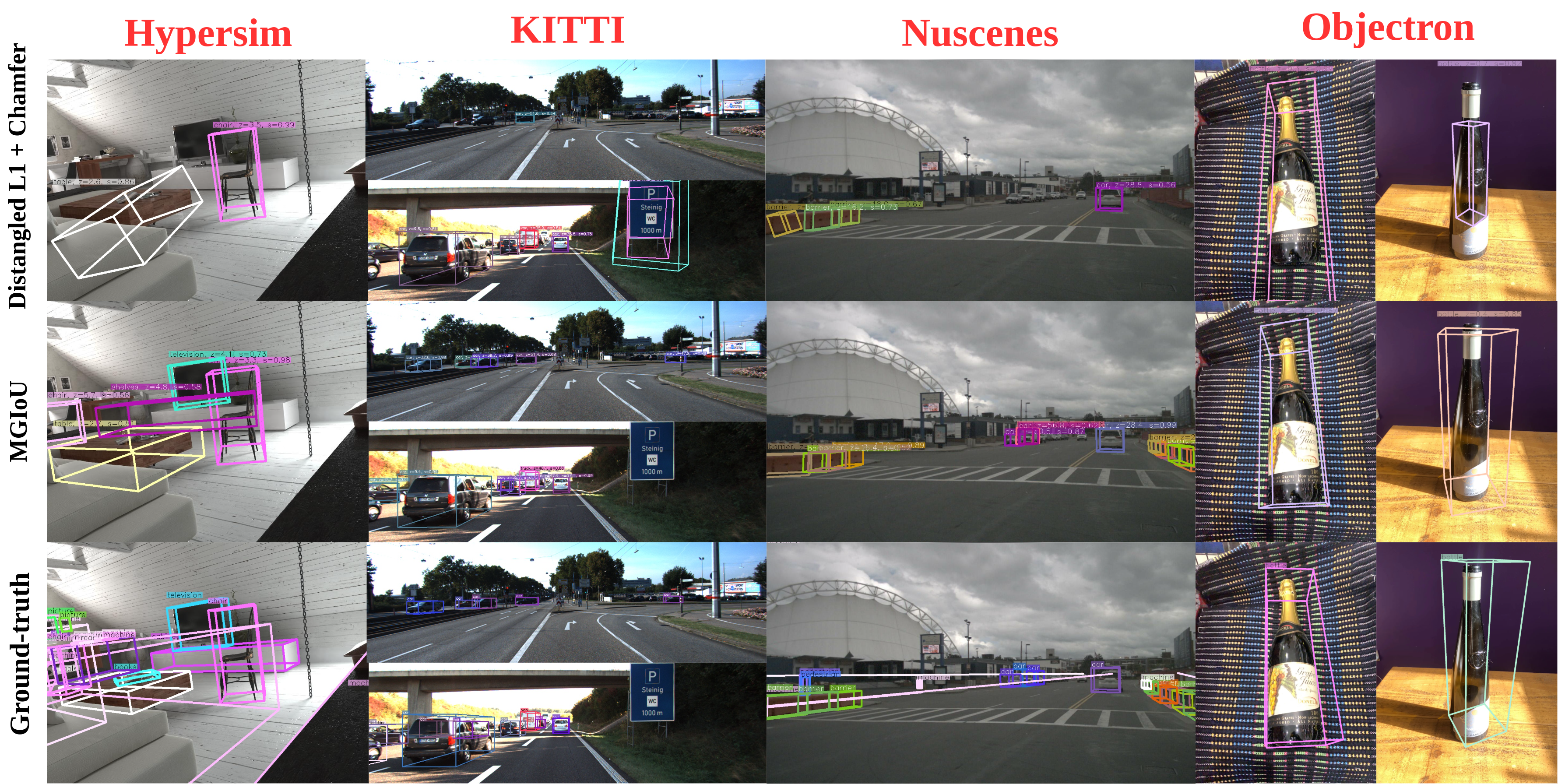}
    \vspace{-0.5em}
    \caption{Qualitative visualisation on Omni3D dataset.}
    \vspace{-1.5em}
    \label{fig:omni3dcomparison}
\end{figure*}

\noindent \textbf{Quadrangle Object Detection:} The quadrangle object detection task, we choose ICDAR2017 \cite{icdar} dataset. The ICDAR2017 competition includes the MLT (Multilingual Text Detection) task, covering text in nine languages and various orientations. This dataset is particularly suitable for quadrangle object detection due to the irregular shapes of text regions, often represented as quadrilaterals. The baseline model selected was YOLO-NAS~\cite{supergradients} due to good performance and ease of implementation, we train for 40 epochs with AdamW and a cosine LR scheduler.

\noindent \textbf{Collision Avoidance in Trajectory Prediction:} The Waymo~\cite{waymo} motion prediction dataset was utilized together with MTR~\cite{mtr} baseline. We trained the model on a subset of 20\% of the training set and report results on validation set, values in \cref{tab:waymo_combined} is for the most confident trajectory per object (not the average of top 6 trajectories). The original model architecture was retained without modifications, so that we can isolate MGIoU$^-$ and evaluate its effectiveness in reducing collisions and enhancing prediction accuracy.

\vspace{-0.5em}
\section{Comparisons with losses and discussions}
\vspace{-0.5em}
\label{sec:experiments}

To evaluate the effectiveness of the Marginalized Generalized IoU (MGIoU) loss, we conducted experiments across the four tasks detailed in Section \ref{sec:settings}. Below, we present the results for each task, including performance metrics and a discussion of their implications, highlighting MGIoU's superior performance compared to other loss functions.
\vspace{-0.5em}
\subsection{2D Oriented Object Detection}
\vspace{-0.5em}
For 2D oriented object detection on the DOTAv1.5 dataset, we compared MGIoU with L1, KFIoU\cite{kfiou}, GWD\cite{gwd}, and KLD\cite{kld} losses using RetinaNet\cite{retinanet} as the baseline model, implemented in mmrotate library \cite{mmrotate}. Performance was measured using mean Average Precision (mAP) and latency (ms) on a single 4090 GPU. Table \ref{tab:2d_detection} shows that MGIoU achieved the highest mAP of 0.554 with a loss computation latency of 0.45 ms, outperforming L1, KFIoU, GWD, and KLD. Its loss computation latency is significantly lower than KFIoU (51.1x slower), GWD (16.9x slower), and KLD (17.3x slower), and only slightly higher than L1. This balance of high accuracy and low latency makes MGIoU ideal for faster and accurate training.

\begin{table}[ht]
\centering
\resizebox{0.8\linewidth}{!}{%
\small
\begin{tabular}{l|cccc}\toprule
\textbf{Loss} & \textbf{mAP} & \textbf{Latency (ms)} & \textbf{Rel. Latency} \\\midrule
\textbf{L1} & 0.522 & \textbf{0.03} & \textcolor{darkgreen}{$\times$0.10} \\
\textbf{KFIoU\cite{kfiou}} & 0.546 & 23 & \textcolor{red}{$\times$51.1} \\
\textbf{GWD\cite{gwd}} & 0.547 & 7.6 & \textcolor{red}{$\times$16.9} \\
\textbf{KLD\cite{kld}} & 0.55 & 7.8 & \textcolor{red}{$\times$17.3} \\\midrule
\textbf{MGIoU (ours)} & \textbf{0.554} & 0.45 & $\times$1.00 \\
\bottomrule
\end{tabular}%
}
\vspace{-0.5em}
\caption{Comparison of Losses on DOTAv1.5 dataset}\label{tab:2d_detection}
\vspace{-1em}
\end{table}
\vspace{-0.5em}
\subsection{Monocular 3D 6-DoF Object Recognition}
\vspace{-0.5em}
For 3D 6-DoF object recognition on the Omni3D dataset, we evaluated MGIoU using CubeRCNN~\cite{omni3d} as the baseline model. Performance was assessed with AP3D across various indoor and outdoor scenes. Table \ref{tab:3d_detection} demonstrates that MGIoU consistently improved AP3D across all datasets compared to Distangled-L1 + Chamfer, with gains from 1.15\% (Objectron) to 18.34\% (SUNRGBD). The overall Omni3D improvement was 2.04 (9\% relative improvement), showing MGIoU’s robustness in both indoor (Omni3D$_{\text{In}}$) and outdoor (Omni3D$_{\text{Out}}$) settings. Other losses typically involve multiple components—such as separate terms for location, dimension, and rotation—which require hyperparameter tuning to balance their contributions. For example, in the baseline Distangled-L1 + Chamfer loss, the L1 loss handles location and dimension, while Chamfer distance addresses rotation discrepancies. This multi-component setup necessitates careful tuning to optimize the trade-off between these aspects, as illustrated in \cref{fig:omni3dcomparison}. In contrast, MGIoU unifies location, dimension, and rotation into a single loss function, optimizing them holistically without the need for additional hyperparameters. There are cases where the baseline predicts a nearly perfect location, but slight errors in dimension or rotation lead to significant drops in 3D IoU, and vice versa. As shown in \cref{fig:omni3dcomparison}, the baseline’s predicted boxes often show misalignments, whereas MGIoU’s boxes better match the ground-truth boxes, mitigating these issues.

\begin{figure*}[tp]
    \centering
    \includegraphics[width=0.9\linewidth]{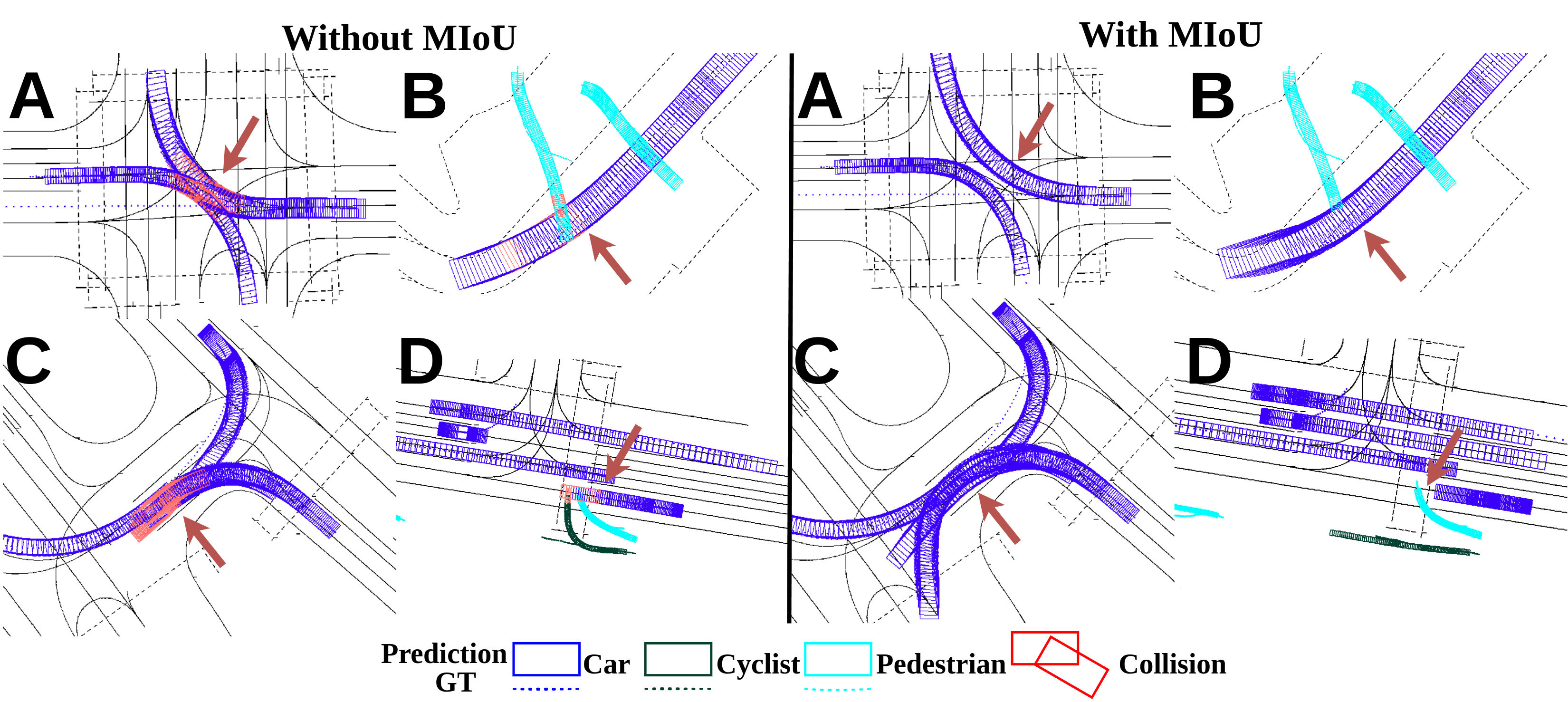}
    \caption{Qualitative visualisation on Waymo dataset, we visualize the predicted future bounding boxes of road agents in the next 8 seconds (80 timesteps), with and without MGIoU$^-$ during training. With MGIoU$^-$ incorporated in the training stage, model now have a better understanding of the physical world and can make safer interactions between road agents. \textbf{A)} 2 cars avoid collision at an intersection, \textbf{B)} Pedestrian stops and wait, \textbf{C)} Smooth interaction among three vehicles without collisions, \textbf{D)} Car appropriately yields to a pedestrian}.
    \label{fig:waymo_comparison}
    \vspace{-1.5em}
\end{figure*}

\begin{table*}[tp]
\centering
\resizebox{0.75\linewidth}{!}{%
\begin{tabular}{lcccccccc}
\toprule
 & \multicolumn{4}{c}{\textbf{Without MGIoU$^-$}} & \multicolumn{4}{c}{\textbf{With MGIoU$^-$}}\\
\cmidrule(lr){2-5} \cmidrule(lr){6-9}
\textbf{Category} & \textbf{mAP} & \textbf{minADE} & \textbf{minFDE} & \textbf{MissRate} & \textbf{mAP} & \textbf{minADE} & \textbf{minFDE} & \textbf{MissRate} \\
\midrule
\textbf{VEHICLE}   & 0.3026 & 1.8511 & 4.6943 & 0.5041 & \textbf{0.3087} & 1.8232 & 4.6481 & 0.4975 \\
\textbf{PEDESTRIAN}& 0.3046 & 0.6557 & 1.5382 & 0.3243 & \textbf{0.3077} & 0.6603 & 1.5551 & 0.3304 \\
\textbf{CYCLIST}   & 0.2397 & 1.5937 & 3.7869 & 0.5116 & \textbf{0.2720} & 1.5511 & 3.7105 & 0.4941 \\
\midrule
\textbf{Avg}       & 0.2823 & 1.3668 & 3.3398 & 0.4466 & \textbf{0.2961} & 1.3449 & 3.3046 & 0.4407 \\
\midrule
\textbf{\# collision} & \multicolumn{4}{c}{7493} & \multicolumn{4}{c}{\textbf{6443 (\(\downarrow\) 14\%)} } \\
\bottomrule
\end{tabular}
}
\vspace{-0.5em}
\caption{Trajectory Prediction on Waymo Dataset (Collisions out of 192,172 predicted trajectories, exclude collisions between pedestrians)}
\label{tab:waymo_combined}
\vspace{-1.5em}
\end{table*}

\vspace{-0.5em}
\subsection{Quadrangle Object Detection}
\vspace{-0.5em}
For quadrangle object detection on the ICDAR2017 dataset, we used YOLO-NAS~\cite{supergradients} as the baseline model. Performance was evaluated using Average Precision (AP) and Average Recall (AR) at an IoU threshold of 0.5. Table \ref{tab:quad_detection} reveals that MGIoU outperformed L1 and OKS distance losses in both AP and AR, which demonstrate MGIoU’s effectiveness in handling irregularly shaped objects like quadrangles in scene text detection.

\begin{table}[h]
\centering
\resizebox{0.6\linewidth}{!}{%
\begin{tabular}{lcc}
\toprule
\textbf{Loss} & \textbf{AP} & \textbf{AR} \\
\midrule
\textbf{QRN \cite{qrn}}            & 48.60       & 42.67       \\
\textbf{OKS distance \cite{oks}}  & 49.10       & 43.39       \\
\textbf{MGIoU$^+$ (Ours)}         & \textbf{49.84}       & \textbf{44.91}       \\
\bottomrule
\end{tabular}
}
\caption{Comparison of Losses on ICDAR2017 dataset.}
\vspace{-1em}
\label{tab:quad_detection}
\end{table}

\vspace{-0.5em}
\subsection{Collision Avoidance in Trajectory Prediction}
\vspace{-0.5em}
For collision avoidance in trajectory prediction on the Waymo dataset, we incorporated MGIoU$^-$ into the model without altering its architecture. We evaluated its impact on collision reduction and prediction metrics (mAP, minADE, minFDE, and MissRate). Table \ref{tab:waymo_combined} indicates that MGIoU$^-$ improved average mAP from 0.2823 to 0.2961 with the improvement across all classes. Additionally, collisions dropped by 14\% (from 7,493 to 6,443), showcasing MGIoU$^-$’s ability to enhance prediction quality and safety in autonomous driving. With MGIoU$^-$, the model gains a heightened awareness of the physical world and safety constraints, enabling safer interactions between road agents. As shown in \cref{fig:waymo_comparison}, the model learns to give way to pedestrians, safely merge onto highways, wait for other vehicles at intersections, and follow behind other cars more effectively. For example, panels with red collision areas in the baseline (without MGIoU$^-$) are replaced by collision-free predictions in the MGIoU$^-$ version, reflecting a 14\% reduction in collisions. These improvements stem from MGIoU$^-$’s ability to penalize overlaps between predicted trajectories, encouraging the model to generate paths that avoid collisions while maintaining high prediction accuracy.
\vspace{-1em}
\section{Conclusion}
\vspace{-0.5em}
This paper addresses the critical challenge of optimizing parametric shapes in computer vision by introducing Marginalized Generalized IoU (MGIoU), a novel loss function that unifies and enhances shape optimization across diverse applications. Extensive experiments demonstrate that MGIoU and its variants consistently outperform strong objective functions in multiple domains, offering a robust and versatile solution for unifying convex parametric shapes optimization.
{
\small
    \bibliographystyle{ieeenat_fullname}
    \bibliography{main}
}

\clearpage
\setcounter{page}{1}
\maketitlesupplementary

\section{Simplification of One-Dimensional GIoU Formula}

We aim to prove that the simplified formula for the one-dimensional Generalized Intersection over Union (GIoU) metric for projections onto axis \( A_i \) is equivalent to the standard GIoU definition. The \textbf{simplified formula that is used throughout our paper}, instead of the original GIOU formula, is:

\begin{equation}
    \text{GIoU}_{A_i}^{1D} = \frac{\min(\max_{P,A_i}, \max_{G,A_i}) - \max(\min_{P,A_i}, \min_{G,A_i})}{\max(\max_{P,A_i}, \max_{G,A_i}) - \min(\min_{P,A_i}, \min_{G,A_i})}
    \label{eq:simplified_giou}
\end{equation}

\noindent We will demonstrate that this matches the standard GIoU definition:

\[
\text{GIoU}_{A_i}^{1D} = \frac{|P_{A_i} \cap G_{A_i}|}{|P_{A_i} \cup G_{A_i}|} - \frac{|C_{A_i} \setminus (P_{A_i} \cup G_{A_i})|}{|C_{A_i}|}
\]

\noindent where:\\
- \( P_{A_i} = [\min_{P,A_i}, \max_{P,A_i}] \) is the projection of the predicted set onto axis \( A_i \),\\
- \( G_{A_i} = [\min_{G,A_i}, \max_{G,A_i}] \) is the projection of the ground truth set onto axis \( A_i \),\\
- \( C_{A_i} \) is the smallest interval enclosing both \( P_{A_i} \) and \( G_{A_i} \),\\
- \( | \cdot | \) denotes the length of an interval.

\noindent\textbf{Step 1}: Define the Intervals and Key Quantities

Let:\\
- \( a = \min_{P,A_i} \) (left endpoint of \( P_{A_i} \)),\\
- \( b = \max_{P,A_i} \) (right endpoint of \( P_{A_i} \)),\\
- \( c = \min_{G,A_i} \) (left endpoint of \( G_{A_i} \)),\\
- \( d = \max_{G,A_i} \) (right endpoint of \( G_{A_i} \)).

\noindent We assume \( a \leq b \) and \( c \leq d \), which is true for valid intervals. The smallest enclosing interval \( C_{A_i} \) is:

\[
C_{A_i} = [\min(a, c), \max(b, d)]
\]

\noindent Its length is:

\[
|C_{A_i}| = \max(b, d) - \min(a, c)
\]

\noindent\textbf{Step 2}: Compute the Intersection

\noindent The intersection \( P_{A_i} \cap G_{A_i} \) is the overlap between \( [a, b] \) and \( [c, d] \). Its length is:

\[
|P_{A_i} \cap G_{A_i}| = \max\left(0, \min(b, d) - \max(a, c)\right)
\]

- If the intervals overlap (\( \max(a, c) < \min(b, d) \)), the intersection length is positive. \\
- If they are disjoint (e.g., \( b < c \) or \( d < a \)), the intersection length is zero.

\noindent\textbf{Step 3}: Compute the Union

\noindent The union \( P_{A_i} \cup G_{A_i} \) spans all points covered by \( [a, b] \) or \( [c, d] \). The length of the union in one dimension is:

\[
|P_{A_i} \cup G_{A_i}| = \max(b, d) - \min(a, c)
\]

\noindent This expression holds because:\\
- If the intervals overlap, \( P_{A_i} \cup G_{A_i} = [\min(a, c), \max(b, d)] \).\\
- If disjoint, this still represents the span from the smallest to the largest endpoint.

\noindent\textbf{Step 4}: Compute the Penalty Term

\noindent The penalty term \( |C_{A_i} \setminus (P_{A_i} \cup G_{A_i})| \) is the length of the smallest enclosing interval not covered by the union:
- If \( P_{A_i} \) and \( G_{A_i} \) overlap or touch, \( P_{A_i} \cup G_{A_i} \) fills \( C_{A_i} \), so:

\[
|C_{A_i} \setminus (P_{A_i} \cup G_{A_i})| = 0
\]

- If disjoint (e.g., \( b < c \)), then \( C_{A_i} = [\min(a, c), \max(b, d)] \), and the gap is:

\[
|C_{A_i} \setminus (P_{A_i} \cup G_{A_i})| = c - b \quad (\text{if } b < c)
\]

\noindent Generally:

\[
|C_{A_i} \setminus (P_{A_i} \cup G_{A_i})| = \max\left(0, \max(a, c) - \min(b, d)\right)
\]

\noindent\textbf{Step 5}: Verify the Simplified Formula

\noindent We verify the simplified formula:

\[
\text{GIoU}_{A_i}^{1D} = \frac{\min(b, d) - \max(a, c)}{\max(b, d) - \min(a, c)}
\]

\noindent\textbf{Case 1}: Overlapping Intervals

\noindent Consider \( a < c < b < d \) (overlapping case):\\
- Intersection: \( |P_{A_i} \cap G_{A_i}| = \min(b, d) - \max(a, c) = b - c \),\\
- Union: \( |P_{A_i} \cup G_{A_i}| = \max(b, d) - \min(a, c) = d - a \),\\
- Penalty: \( |C_{A_i} \setminus (P_{A_i} \cup G_{A_i})| = 0 \) (union fills \( C_{A_i} \)),\\
- \( |C_{A_i}| = d - a \).

\noindent Standard GIoU:

\[
\text{GIoU} = \frac{b - c}{d - a} - \frac{0}{d - a} = \frac{b - c}{d - a}
\]

\noindent Simplified formula:

\[
\frac{\min(b, d) - \max(a, c)}{\max(b, d) - \min(a, c)} = \frac{b - c}{d - a}
\]

\noindent The formulas match.

\noindent\textbf{Case 2}: Disjoint Intervals

\noindent Consider \( a < b < c < d \) (disjoint case):\\
- Intersection: \( |P_{A_i} \cap G_{A_i}| = \max\left(0, \min(b, d) - \max(a, c)\right) = \max\left(0, b - c\right) = 0 \) (since \( b < c \)),\\
- Union: \( |P_{A_i} \cup G_{A_i}| = \max(b, d) - \min(a, c) = d - a \),\\
- \( |C_{A_i}| = d - a \),\\
- Penalty: \( |C_{A_i} \setminus (P_{A_i} \cup G_{A_i})| = c - b \).

\noindent Standard GIoU:

\[
\text{IoU} = \frac{0}{(b - a) + (d - c)} = 0,
\]
\[
 \quad \text{GIoU} = 0 - \frac{c - b}{d - a} = -\frac{c - b}{d - a}
\]
\noindent Simplified formula:

\[
\frac{\min(b, d) - \max(a, c)}{\max(b, d) - \min(a, c)} = \frac{b - c}{d - a} = -\frac{c - b}{d - a}
\]

\noindent The formulas match.

\noindent\textbf{Step 6}: Conclusion

\noindent The simplified formula \cref{eq:simplified_giou} correctly computes the one-dimensional GIoU:\\
- For overlapping intervals, it equals the IoU (penalty term is zero).\\
- For disjoint intervals, it yields a negative value, \( -\frac{\text{gap}}{|C_{A_i}|} \), consistent with GIoU when IoU is zero.

\noindent This completes the proof, confirming that the simplified formula is a valid representation of the GIoU metric for one-dimensional projections.

\section{Proof of Properties for MGIoU}
\label{supp:proof_metric_property}

To assess the robustness of the MGIoU as a similarity measure for shape optimization, we define the loss function \( \mathcal{L}_{\text{MGIoU}}(P, G) = \frac{1 - \text{MGIoU}(P, G)}{2} \) and evaluate its properties. Specifically, we analyze the following properties for \( \mathcal{L}_{\text{MGIoU}} \) over structured convex shapes \( P \) and \( G \) with a shared parametric domain:
\begin{enumerate}
    \item \textbf{Non-negativity}: \( \mathcal{L}_{\text{MGIoU}}(P, G) \geq 0 \),
    \item \textbf{Identity}: \( \mathcal{L}_{\text{MGIoU}}(P, G) = 0 \) if and only if \( P = G \),
    \item \textbf{Symmetry}: \( \mathcal{L}_{\text{MGIoU}}(P, G) = \mathcal{L}_{\text{MGIoU}}(G, P) \),
    \item \textbf{Triangle Inequality}: \( \mathcal{L}_{\text{MGIoU}}(P, R) \leq \mathcal{L}_{\text{MGIoU}}(P, Q) + \mathcal{L}_{\text{MGIoU}}(Q, R) \),
    \item \textbf{Scale-Invariance}: \( \mathcal{L}_{\text{MGIoU}}(sP, sG) = \mathcal{L}_{\text{MGIoU}}(P, G) \) for any scalar \( s > 0 \).
\end{enumerate}

\subsection*{Definition of MGIoU}
MGIoU is defined as the average of one-dimensional Generalized Intersection over Union (GIoU) values computed across a set of projection directions:
\[
\text{MGIoU}(P, G) = \frac{1}{|\mathcal{A}|} \sum_{i \in \mathcal{A}} \text{GIoU}^{1D}_i(P_i, G_i)
\]
where:\\
- \( \mathcal{A} \) is the set of unique normal directions derived from the surfaces of \( P \) and \( G \),\\
- \( P_i \) and \( G_i \) are the projections of shapes \( P \) and \( G \) onto the \( i \)-th normal direction,\\
- \( \text{GIoU}^{1D}_i(P_i, G_i) \) is the one-dimensional GIoU for the projected intervals.

The associated loss function is:
\begin{align*}
\mathcal{L}_{\text{MGIoU}}(P, G) &= \frac{1 - \text{MGIoU}(P, G)}{2} \\
&= \frac{1}{2} - \frac{1}{2|\mathcal{A}|} \sum_{i \in \mathcal{A}} \text{GIoU}^{1D}_i(P_i, G_i)
\end{align*}

\begin{lemma}[Symmetry of MGIoU]
For any two structured convex shapes \( P \) and \( G \),
\[
\text{MGIoU}(P, G) = \text{MGIoU}(G, P)
\]
\end{lemma}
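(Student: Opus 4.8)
The plan is to reduce the symmetry of $\text{MGIoU}$ to two independent facts: that the index set $\mathcal{A}$ over which we average is invariant under swapping $P$ and $G$, and that each individual summand $\text{GIoU}^{1D}_i$ is itself symmetric. Since $\text{MGIoU}(P,G) = \frac{1}{|\mathcal{A}|} \sum_{i \in \mathcal{A}} \text{GIoU}^{1D}_i(P_i, G_i)$ is just a normalized sum, establishing both facts immediately yields $\text{MGIoU}(P,G) = \text{MGIoU}(G,P)$.

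First I would handle the index set. By definition, $\mathcal{A}$ is the set of unique directional normals derived from the surfaces of \emph{both} $P$ and $G$ — that is, $\mathcal{A}$ is the union of the normal directions contributed by each shape, with duplicates removed. Because set union is commutative, the collection $\mathcal{A}(P,G)$ equals $\mathcal{A}(G,P)$, and the normalization factor $|\mathcal{A}|$ is likewise unchanged. For structured shapes sharing a parametric domain this is especially clean, since $P$ and $G$ generate normals by the same construction, but the argument only relies on commutativity of the union.

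Next I would show each summand is symmetric. Appealing to the simplified one-dimensional formula established in the previous appendix section, $\text{GIoU}^{1D}_i = \frac{\min(\max_{P,A_i}, \max_{G,A_i}) - \max(\min_{P,A_i}, \min_{G,A_i})}{\max(\max_{P,A_i}, \max_{G,A_i}) - \min(\min_{P,A_i}, \min_{G,A_i})}$. Swapping $P$ and $G$ interchanges the two arguments of every $\min$ and $\max$ appearing in both the numerator and the denominator. Since $\min(x,y) = \min(y,x)$ and $\max(x,y) = \max(y,x)$, the expression is left entirely unchanged, so $\text{GIoU}^{1D}_i(P_i,G_i) = \text{GIoU}^{1D}_i(G_i,P_i)$ for every $i$.

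Combining the two facts, the sum $\sum_{i \in \mathcal{A}} \text{GIoU}^{1D}_i$ ranges over the same index set and consists of term-by-term equal values under the swap, so the averaged quantity is invariant, completing the proof. The only point requiring genuine care — and the step I would flag as the main obstacle — is the first one: one must verify that $\mathcal{A}$ is genuinely a symmetric function of the unordered pair $\{P,G\}$ rather than an ordered construction that privileges the first argument. Once the normal set is recognized as a commutative union, the remaining algebra is routine.
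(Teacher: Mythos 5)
Your proposal is correct and follows essentially the same route as the paper's proof: both reduce symmetry to the facts that the normal set $\mathcal{A}$ is a symmetric (commutative-union) construction over the pair $(P,G)$ and that each $\text{GIoU}^{1D}_i$ summand is symmetric, then average. The only cosmetic difference is that you justify term-wise symmetry via the min/max form of the simplified 1D formula, whereas the paper cites commutativity of intersection and union in one dimension — these are the same fact.
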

\begin{proof}
The set \( \mathcal{A} \) comprises unique normals from both \( P \) and \( G \), making it symmetric with respect to the pair \( (P, G) \). For each direction \( i \in \mathcal{A} \), the one-dimensional GIoU, \( \text{GIoU}^{1D}_i(P_i, G_i) \), is symmetric because intersection and union operations are commutative in one dimension. Thus:
\begin{align*}
\text{MGIoU}(P, G) &= \frac{1}{|\mathcal{A}|} \sum_{i \in \mathcal{A}} \text{GIoU}^{1D}_i(P_i, G_i) \\
&= \frac{1}{|\mathcal{A}|} \sum_{i \in \mathcal{A}} \text{GIoU}^{1D}_i(G_i, P_i) \\
&= \text{MGIoU}(G, P)
\end{align*}
Consequently, the loss function satisfies:
\[
\mathcal{L}_{\text{MGIoU}}(P, G) = \mathcal{L}_{\text{MGIoU}}(G, P)
\]
\end{proof}

\begin{lemma}[Identity Property of MGIoU]
For structured convex shapes \( P \) and \( G \) with the same parametric domain,
\[
\text{MGIoU}(P, G) = 1 \quad \text{if and only if} \quad P = G
\]
\end{lemma}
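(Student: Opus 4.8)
The plan is to prove the two implications separately, relying on two facts established earlier in the excerpt: the simplified one-dimensional identity \cref{eq:simplified_giou}, and the fact that each summand satisfies $\text{GIoU}^{1D}_i \in (-1, 1]$. The forward direction is immediate: if $P = G$, then for every normal $\mathbf{a}_i \in \mathcal{A}$ the projected intervals coincide, $P_i = G_i$, so each $\text{GIoU}^{1D}_i(P_i, G_i) = 1$, and averaging over $\mathcal{A}$ gives $\text{MGIoU}(P,G) = 1$.

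For the converse I would first reduce the averaged condition to a per-direction one. Since every summand is bounded above by $1$ and their mean equals $1$, each must individually attain its maximum, i.e. $\text{GIoU}^{1D}_i(P_i, G_i) = 1$ for all $i \in \mathcal{A}$. Next I would characterize equality in the 1D formula: writing $a = \min_{P,A_i}$, $b = \max_{P,A_i}$, $c = \min_{G,A_i}$, $d = \max_{G,A_i}$, the requirement $\frac{\min(b,d) - \max(a,c)}{\max(b,d) - \min(a,c)} = 1$ forces numerator to equal denominator, which rearranges to $-|b-d| = |a-c|$; because the left side is nonpositive and the right nonnegative, both vanish, yielding $a = c$ and $b = d$. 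Hence the projected intervals are identical, $P_i = G_i$, for every $\mathbf{a}_i \in \mathcal{A}$.

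The remaining, and principal, step is to show that identical projections along all directions in $\mathcal{A}$ force $P = G$ as sets. I would recast interval equality in terms of the support function $h_K(u) = \max_{x \in K} \langle x, u \rangle$: the projection of a convex body $K$ onto $\mathbf{a}_i$ is $[-h_K(-\mathbf{a}_i),\, h_K(\mathbf{a}_i)]$, so $P_i = G_i$ is exactly the statement $h_P(\pm \mathbf{a}_i) = h_G(\pm \mathbf{a}_i)$ for each $i$. Because $P$ and $G$ are structured convex shapes sharing a parametric domain, each admits a half-space representation whose outward facet normals lie in $\mathcal{A}$ (up to sign). I would then establish $P \subseteq G$: for any $x \in P$ and any defining normal $b$ of $G$, we have $\langle x, b\rangle \le h_P(b) = h_G(b)$, so $x$ satisfies every inequality defining $G$; the symmetric argument gives $G \subseteq P$, and therefore $P = G$.

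The hard part will be justifying that $\mathcal{A}$ is \emph{determining}, that the finite set of normals collected from $P$ and $G$ genuinely contains every facet normal appearing in the half-space representations of both shapes. This is exactly where the ``structured, same parametric domain'' hypothesis is indispensable: for a shared shape family the defining normals form a known finite set (the two orthogonal directions of a rectangle, the three of a cuboid, or the semi-axis directions of an ellipse), so matching the support values along $\mathcal{A} \cup (-\mathcal{A})$ simultaneously pins down position, size, and orientation; for an arbitrary convex body one would instead require agreement along all directions, and the finite set $\mathcal{A}$ would no longer suffice.
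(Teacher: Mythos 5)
Your proof is correct and follows the same skeleton as the paper's: the forward direction by direct evaluation, and for the converse, reducing the unit average to per-direction equality, then to equality of projected intervals, then to equality of the shapes. The difference is rigor at precisely the two places the paper merely asserts. First, you actually derive $P_i = G_i$ from $\mathrm{GIoU}^{1D}_i = 1$ using the simplified one-dimensional formula; your algebra (numerator equals denominator rearranges to $-|b-d| = |a-c|$, forcing $a=c$ and $b=d$) is correct, whereas the paper states this implication without justification. Second, and more substantially, the paper's final step --- ``given that $\mathcal{A}$ includes all unique normals and the shapes are structured and convex with a shared parametric domain, $P = G$'' --- is a bare assertion; you supply an actual argument via support functions and half-space representations, which is a complete proof for polytopal families (rectangles, cuboids, convex polygons) whose facet normals lie in $\mathcal{A} \cup (-\mathcal{A})$. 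You are also right that the ``determining'' property of $\mathcal{A}$ is the crux: for non-polytopal families such as ellipses and ellipsoids, where $\mathcal{A}$ consists of semi-axis directions, the half-space argument does not apply, and showing that agreement of projections along those finitely many directions pins down the shape requires a separate, family-specific computation. Neither your proposal nor the paper carries that case out, but you explicitly identify the gap while the paper glosses over it entirely. In short: same route, but your version makes precise which hypotheses the last step actually consumes, and is strictly more rigorous than the published proof.
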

\begin{proof}
- \textit{Forward direction}: If \( P = G \), then for all \( i \in \mathcal{A} \), the projections satisfy \( P_i = G_i \). Since \( \text{GIoU}^{1D}_i(P_i, P_i) = 1 \) (the intersection equals the union), we have:
\[
\text{MGIoU}(P, P) = \frac{1}{|\mathcal{A}|} \sum_{i \in \mathcal{A}} 1 = 1
\]
- \textit{Converse direction}: If \( \text{MGIoU}(P, G) = 1 \), then:
\[
\frac{1}{|\mathcal{A}|} \sum_{i \in \mathcal{A}} \text{GIoU}^{1D}_i(P_i, G_i) = 1
\]
Since \( \text{GIoU}^{1D}_i \leq 1 \) and the average is 1, it follows that \( \text{GIoU}^{1D}_i(P_i, G_i) = 1 \) for all \( i \in \mathcal{A} \). This implies \( P_i = G_i \) for all projection directions. Given that \( \mathcal{A} \) includes all unique normals from \( P \) and \( G \), and the shapes are structured and convex with a shared parametric domain, \( P = G \) in the original space.

Thus, \( \mathcal{L}_{\text{MGIoU}}(P, G) = 0 \) if and only if \( P = G \).
\end{proof}

\begin{lemma}[Scale-Invariance of MGIoU]
For any scalar \( s > 0 \),
\[
\mathcal{L}_{\text{MGIoU}}(sP, sG) = \mathcal{L}_{\text{MGIoU}}(P, G)
\]
\end{lemma}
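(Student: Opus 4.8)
The plan is to reduce the claim to the scale invariance of each individual one-dimensional term $\text{GIoU}^{1D}_i$, and then propagate this through the averaging and the affine map $x \mapsto (1-x)/2$ that defines $\mathcal{L}_{\text{MGIoU}}$. Since $\mathcal{L}_{\text{MGIoU}}(P,G) = \tfrac{1}{2} - \tfrac{1}{2|\mathcal{A}|}\sum_{i \in \mathcal{A}}\text{GIoU}^{1D}_i(P_i,G_i)$, it suffices to establish two facts: that the index set $\mathcal{A}$ is unchanged when both shapes are scaled by $s>0$, and that $\text{GIoU}^{1D}_i(sP_i, sG_i) = \text{GIoU}^{1D}_i(P_i, G_i)$ for every $i$.

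First I would verify that $\mathcal{A}$ is invariant under uniform scaling. The normals are constructed from the directions of edges (2D), faces (3D), or semi-axes (ellipsoids); a uniform scaling $v \mapsto sv$ with $s>0$ maps each edge-difference vector $v$ to $sv$, which has the same direction, so the unit normal directions — and hence the deduplicated set $\mathcal{A}$ together with its cardinality $|\mathcal{A}|$ — are preserved. Next I would record how projections transform: for a fixed normal $\mathbf{a}_i$ we have $(sP)\cdot \mathbf{a}_i = s(P\cdot\mathbf{a}_i)$, so every projected endpoint scales by $s$, giving $\max_{sP,A_i} = s\max_{P,A_i}$ and $\min_{sP,A_i} = s\min_{P,A_i}$, and likewise for $G$.

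Then I would substitute these relations into the simplified one-dimensional formula established earlier in the appendix. Using the positive homogeneity of the pointwise extrema, $\min(s x, s y) = s\min(x,y)$ and $\max(sx,sy)=s\max(x,y)$ — which hold precisely because $s>0$ — a common factor of $s$ appears in both numerator and denominator of $\text{GIoU}^{1D}_i(sP_i,sG_i)$ and cancels, yielding $\text{GIoU}^{1D}_i(sP_i,sG_i)=\text{GIoU}^{1D}_i(P_i,G_i)$. Summing over $i$ and dividing by the unchanged $|\mathcal{A}|$ gives $\text{MGIoU}(sP,sG)=\text{MGIoU}(P,G)$, and applying the map $x\mapsto(1-x)/2$ concludes the proof.

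The main obstacle — and the only genuinely nontrivial step — is the invariance of $\mathcal{A}$: one must argue carefully that uniform scaling preserves the set of normal directions (including the deduplication of parallel normals) and therefore the exact number of terms in the sum. The cancellation of $s$ within each term is routine once the positivity of $s$ is invoked to pull the scalar out of the $\min$ and $\max$ operations; I would emphasize that this argument breaks for $s<0$, where $\min$ and $\max$ exchange roles and the factorization fails, which is exactly why the hypothesis $s>0$ is essential.
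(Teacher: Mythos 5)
Your proof is correct and follows essentially the same route as the paper: both establish that each one-dimensional term \( \text{GIoU}^{1D}_i \) is unchanged when both shapes are scaled by \( s > 0 \) (you cancel \( s \) in the simplified single-fraction formula via positive homogeneity of \( \min \) and \( \max \), while the paper cancels it separately in the IoU and penalty terms of the standard two-term form) and then propagate the invariance through the average and the affine map \( x \mapsto (1-x)/2 \). Your explicit verification that the normal set \( \mathcal{A} \) and its cardinality are preserved under uniform scaling is a point the paper tacitly assumes, so it is a welcome tightening rather than a different approach.
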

\begin{proof}
When both shapes \( P \) and \( G \) are scaled by the same factor \( s > 0 \), their projections onto each normal direction \( i \in \mathcal{A} \) scale by \( s \). The one-dimensional GIoU \( \text{GIoU}^{1D}_i(sP_i, sG_i) \) remains unchanged because the intersection, union, and smallest enclosing interval scale proportionally:
\begin{align*}
\text{IoU}(sP_i, sG_i) &= \frac{|sP_i \cap sG_i|}{|sP_i \cup sG_i|} \\
&= \frac{s |P_i \cap G_i|}{s |P_i \cup G_i|} \\
&= \text{IoU}(P_i, G_i)
\end{align*}
\begin{align*}
\frac{|C_{sP_i, sG_i} \setminus (sP_i \cup sG_i)|}{|C_{sP_i, sG_i}|} &= \frac{s |C_{P_i, G_i} \setminus (P_i \cup G_i)|}{s |C_{P_i, G_i}|} \\
&= \frac{|C_{P_i, G_i} \setminus (P_i \cup G_i)|}{|C_{P_i, G_i}|}
\end{align*}
Thus:
\begin{align*}
\text{GIoU}^{1D}_i(sP_i, sG_i) &= \text{IoU}(sP_i, sG_i) - \frac{|C_{sP_i, sG_i} \setminus (sP_i \cup sG_i)|}{|C_{sP_i, sG_i}|} \\
&= \text{IoU}(P_i, G_i) - \frac{|C_{P_i, G_i} \setminus (P_i \cup G_i)|}{|C_{P_i, G_i}|} \\
&= \text{GIoU}^{1D}_i(P_i, G_i)
\end{align*}
Since this holds for all \( i \in \mathcal{A} \), the MGIoU is unchanged:
\begin{align*}
\text{MGIoU}(sP, sG) &= \frac{1}{|\mathcal{A}|} \sum_{i \in \mathcal{A}} \text{GIoU}^{1D}_i(sP_i, sG_i) \\
&= \frac{1}{|\mathcal{A}|} \sum_{i \in \mathcal{A}} \text{GIoU}^{1D}_i(P_i, G_i) \\
&= \text{MGIoU}(P, G)
\end{align*}
Therefore, the loss function is scale-invariant:
\begin{align*}
\mathcal{L}_{\text{MGIoU}}(sP, sG) &= \frac{1 - \text{MGIoU}(sP, sG)}{2} \\
&= \frac{1 - \text{MGIoU}(P, G)}{2} \\
&= \mathcal{L}_{\text{MGIoU}}(P, G)
\end{align*}
\end{proof}

\begin{proposition}[Properties of \( \mathcal{L}_{\text{MGIoU}} \)]
The loss function \( \mathcal{L}_{\text{MGIoU}}(P, G) = \frac{1 - \text{MGIoU}(P, G)}{2} \) satisfies:
\begin{enumerate}
    \item \textbf{Non-negativity}: \( \mathcal{L}_{\text{MGIoU}}(P, G) \geq 0 \),
    \item \textbf{Identity}: \( \mathcal{L}_{\text{MGIoU}}(P, G) = 0 \) if and only if \( P = G \),
    \item \textbf{Symmetry}: \( \mathcal{L}_{\text{MGIoU}}(P, G) = \mathcal{L}_{\text{MGIoU}}(G, P) \),
    \item \textbf{Triangle Inequality}: \( \mathcal{L}_{\text{MGIoU}}(P, R) \leq \mathcal{L}_{\text{MGIoU}}(P, Q) + \mathcal{L}_{\text{MGIoU}}(Q, R) \),
    \item \textbf{Scale-Invariance}: \( \mathcal{L}_{\text{MGIoU}}(sP, sG) = \mathcal{L}_{\text{MGIoU}}(P, G) \) for any scalar \( s > 0 \).
\end{enumerate}
\end{proposition}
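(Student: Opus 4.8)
The plan is to dispatch the five properties by leaning on the three preceding lemmas and then concentrating effort on the two genuinely new items, non-negativity and the triangle inequality. Symmetry, Identity, and Scale-Invariance are exactly the statements of the Symmetry, Identity, and Scale-Invariance lemmas (transported through the decreasing affine map $\mathcal{L}_{\text{MGIoU}} = (1 - \text{MGIoU})/2$, which preserves each of these properties), so I would simply cite them. Non-negativity is immediate from the simplified one-dimensional formula: for each direction the numerator $\min(\max_{P},\max_{G}) - \max(\min_{P},\min_{G})$ never exceeds the denominator $\max(\max_{P},\max_{G}) - \min(\min_{P},\min_{G})$, so every $\text{GIoU}^{1D}_i \le 1$; averaging preserves the bound, giving $\text{MGIoU}(P,G) \le 1$ and hence $\mathcal{L}_{\text{MGIoU}}(P,G) \ge 0$.

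The core of the argument is the triangle inequality, and the natural strategy is to push everything down to a single projection direction and then recombine. Writing $\ell_i(P,R) = \tfrac{1}{2}\bigl(1 - \text{GIoU}^{1D}_i(P_i,R_i)\bigr)$, linearity of the averaging gives $\mathcal{L}_{\text{MGIoU}}(P,R) = \frac{1}{|\mathcal{A}|}\sum_i \ell_i(P,R)$, so it suffices to establish the triangle inequality for each one-dimensional loss $\ell_i$ and then sum. The first key step is to rewrite $\ell_i$ in a transparent closed form: using endpoints $a_I \le b_I$ for each projected interval, a direct manipulation shows that the numerator of $1 - \text{GIoU}^{1D}$ collapses to $|a_I - a_J| + |b_I - b_J|$ and the denominator to twice the enclosing length, yielding
\[
\ell_i(I,J) = \frac{N(I,J)}{w_I + w_J + N(I,J)}, \qquad N(I,J) = |a_I - a_J| + |b_I - b_J|,
\]
where $w_I, w_J$ are the interval widths. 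Thus $N$ is the genuine $L^1$ metric on endpoint pairs and $\ell_i$ is a width-normalized version of it.

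With this form in hand I would prove a self-contained lemma: if $d$ is a metric and $w \ge 0$ is $1$-Lipschitz with respect to $d$ (that is, $d(x,y) \ge |w(x) - w(y)|$), then $\rho(x,y) = d(x,y)/\bigl(w(x) + w(y) + d(x,y)\bigr)$ is again a metric. Applying it with $d = N$ and $w$ the width yields the one-dimensional triangle inequality, since the hypothesis $N(I,J) \ge |w_I - w_J|$ is itself an instance of the triangle inequality for $N$. The lemma proceeds in two moves. First, monotonicity of $t \mapsto t/(C+t)$ together with $d_{xz} \le d_{xy} + d_{yz}$ reduces the claim to comparing $\frac{d_{xy}+d_{yz}}{w_x + w_z + d_{xy}+d_{yz}}$ with $\frac{d_{xy}}{w_x + w_y + d_{xy}} + \frac{d_{yz}}{w_y + w_z + d_{yz}}$. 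Second, clearing denominators and writing $\alpha = w_x + w_y + d_{xy}$, $\beta = w_y + w_z + d_{yz}$, the difference of the two sides factors as $d_{xy}\,\beta(\beta - 2w_y) + d_{yz}\,\alpha(\alpha - 2w_y)$, which is nonnegative because the Lipschitz bound forces $\alpha \ge 2w_y$ and $\beta \ge 2w_y$.

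The main obstacle is not the one-dimensional inequality itself, which is clean, but the recombination step: the set $\mathcal{A}$ of projection normals is shape-dependent, so $\mathcal{A}(P,R)$, $\mathcal{A}(P,Q)$, and $\mathcal{A}(Q,R)$ need not coincide, and the summed per-direction inequalities only assemble into $\mathcal{L}_{\text{MGIoU}}(P,R) \le \mathcal{L}_{\text{MGIoU}}(P,Q) + \mathcal{L}_{\text{MGIoU}}(Q,R)$ when the three comparisons share a common index set of directions. I would therefore state the triangle inequality relative to a fixed common family of projection directions (for structured shapes in a shared parametric family this is natural, e.g.\ in the shared-orientation regime where all three shapes induce the same normals, or by fixing a global direction set onto which every shape is projected). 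Under such a common direction set the average of one-dimensional metrics is itself a metric and the inequality follows termwise; reconciling the direction sets is the delicate part of the write-up, whereas everything else reduces to the closed form and the normalized-metric lemma above.
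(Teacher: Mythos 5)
Your proposal is correct, and its skeleton matches the paper's own proof: non-negativity from $\text{MGIoU} \le 1$; Identity, Symmetry, and Scale-Invariance quoted from the three preceding lemmas; and the triangle inequality reduced to per-direction inequalities under an assumed fixed common direction set $\mathcal{A}$, then averaged. The genuine difference is how the crucial one-dimensional step is discharged. The paper simply asserts that $d_i = 1 - \text{GIoU}^{1D}_i$ ``inherits the triangle inequality from standard GIoU interval metrics'' and cites the original GIoU work, with no proof. You instead make that step self-contained: the closed form
\[
\frac{1 - \text{GIoU}^{1D}(I,J)}{2} \;=\; \frac{N(I,J)}{w_I + w_J + N(I,J)}, \qquad N(I,J) = |a_I - a_J| + |b_I - b_J|,
\]
is correct (since $2|C| = w_I + w_J + N(I,J)$ by $\max(b_I,b_J) = \tfrac{1}{2}(b_I+b_J+|b_I-b_J|)$ and $\min(a_I,a_J) = \tfrac{1}{2}(a_I+a_J-|a_I-a_J|)$), exhibiting the per-direction loss as a width-normalized $L^1$ endpoint metric; your normalized-metric lemma then applies because $N(I,J) \ge |w_I - w_J|$ is itself a triangle-type inequality for $N$, and the factorization $d_{xy}\,\beta(\beta - 2w_y) + d_{yz}\,\alpha(\alpha - 2w_y) \ge 0$ with $\alpha,\beta \ge 2w_y$ checks out. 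What your route buys is a rigorous proof of exactly the step the paper hand-waves, plus a reusable structural fact about $1$-D GIoU; what the paper's citation-based route buys is only brevity. You also identify, correctly, that the shape-dependence of $\mathcal{A}$ is the real delicacy: the paper imposes the same fixed-$\mathcal{A}$ assumption but presents it as a harmless simplification, whereas without it the three losses in the inequality are averages over different direction sets and the termwise argument does not assemble; your explicit framing of this restriction (shared normals within a structured family, or a globally fixed direction set) is more careful than the paper's.
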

\begin{proof}
- \textbf{Non-negativity}: Since \( \text{MGIoU}(P, G) \leq 1 \), we have:
\[
\mathcal{L}_{\text{MGIoU}}(P, G) = \frac{1 - \text{MGIoU}(P, G)}{2} \geq 0
\]
- \textbf{Identity}: From Lemma 2, \( \mathcal{L}_{\text{MGIoU}}(P, G) = 0 \) if and only if \( \text{MGIoU}(P, G) = 1 \), which holds if and only if \( P = G \). \\
- \textbf{Symmetry}: From Lemma 1, \( \mathcal{L}_{\text{MGIoU}}(P, G) = \mathcal{L}_{\text{MGIoU}}(G, P) \).\\
- \textbf{Triangle Inequality}: To prove \( \mathcal{L}_{\text{MGIoU}}(P, R) \leq \mathcal{L}_{\text{MGIoU}}(P, Q) + \mathcal{L}_{\text{MGIoU}}(Q, R) \), assume a fixed set of projection directions \( \mathcal{A} \) for all pairs of shapes under consideration. This assumption simplifies the comparison across different shape pairs by ensuring a consistent set of directions.

Define the one-dimensional distance for each direction \( i \in \mathcal{A} \) as:
\[
d_i(P_i, G_i) = 1 - \text{GIoU}^{1D}_i(P_i, G_i)
\]
Then, the loss function can be expressed as:
\begin{align*}
\mathcal{L}_{\text{MGIoU}}(P, G) &= \frac{1}{2} \left(1 - \text{MGIoU}(P, G)\right) \\
&= \frac{1}{2} \cdot \frac{1}{|\mathcal{A}|} \sum_{i \in \mathcal{A}} d_i(P_i, G_i)
\end{align*}
For each direction \( i \in \mathcal{A} \), the one-dimensional distance \( d_i \) satisfies the triangle inequality:
\[
d_i(P_i, R_i) \leq d_i(P_i, Q_i) + d_i(Q_i, R_i)
\]
This holds because \( \text{GIoU}^{1D}_i \) behaves as a metric-like function in one dimension, where projections reduce to intervals, and the GIoU distance \( 1 - \text{GIoU}^{1D}_i \) inherits the triangle inequality from standard GIoU interval metrics \cite{giou}.

Summing this inequality over all directions \( i \in \mathcal{A} \):
\[
\sum_{i \in \mathcal{A}} d_i(P_i, R_i) \leq \sum_{i \in \mathcal{A}} d_i(P_i, Q_i) + \sum_{i \in \mathcal{A}} d_i(Q_i, R_i)
\]
Multiply both sides by the positive constant \( \frac{1}{2} \cdot \frac{1}{|\mathcal{A}|} \):
\begin{align*}
\frac{1}{2} \cdot \frac{1}{|\mathcal{A}|} \sum_{i \in \mathcal{A}} d_i(P_i, R_i) &\leq \frac{1}{2} \cdot \frac{1}{|\mathcal{A}|} \sum_{i \in \mathcal{A}} d_i(P_i, Q_i) \\
+ \frac{1}{2} \cdot \frac{1}{|\mathcal{A}|} \sum_{i \in \mathcal{A}} d_i(Q_i, R_i)
\end{align*}
This directly corresponds to:
\[
\mathcal{L}_{\text{MGIoU}}(P, R) \leq \mathcal{L}_{\text{MGIoU}}(P, Q) + \mathcal{L}_{\text{MGIoU}}(Q, R)
\]
- \textbf{Scale-Invariance}: From Lemma 3, \( \mathcal{L}_{\text{MGIoU}}(sP, sG) = \mathcal{L}_{\text{MGIoU}}(P, G) \).
\end{proof}
\end{document}